\newtheorem{theorem}{Theorem}
\let\vec\mathbf
\DeclareMathOperator*{\argmin}{arg\,min}
\DeclareMathOperator*{\sgn}{sgn}
\newcommand{\aoc}{\ensuremath{A_{\mathrm{OC}}}}
\newcommand{\aga}{\ensuremath{A_{\mathrm{GA}}}}
\newcommand{\ars}{\ensuremath{A_{\mathrm{RS}}}}
\newcommand{\diceml}{\texttt{DiCE-ML}}
\setlist{parsep=0pt, itemsep=1.5pt, topsep=-2pt}
\begin{document}

\title{Iterative Partial Fulfillment of Counterfactual Explanations: Benefits and Risks}

\author{Yilun Zhou}
\affiliation{%
  \institution{MIT CSAIL}
  \city{Cambridge}
  \state{MA}
  \country{USA}
}
\email{yilun@csail.mit.edu}

\renewcommand{\shortauthors}{Yilun Zhou}

\begin{abstract}
  Counterfactual (CF) explanations, also known as contrastive explanations and algorithmic recourses, are popular for explaining machine learning models in high-stakes domains. For a subject that receives a negative model prediction (e.g., mortgage application denial), the CF explanations are similar instances but with positive predictions, which informs the subject of ways to improve. While their various properties have been studied, such as validity and stability, we contribute a novel one: their behaviors under \textit{iterative partial fulfillment} (IPF). Specifically, upon receiving a CF explanation, the subject may only partially fulfill it before requesting a new prediction with a new explanation, and repeat until the prediction is positive. Such partial fulfillment could be due to the subject's limited capability (e.g., can only pay down two out of four credit card accounts at this moment) or an attempt to take the chance (e.g., betting that a monthly salary increase of \$800 is enough even though \$1,000 is recommended). Does such iterative partial fulfillment increase or decrease the total cost of improvement incurred by the subject? We mathematically formalize IPF and demonstrate, both theoretically and empirically, that different CF algorithms exhibit vastly different behaviors under IPF. We discuss implications of our observations, advocate for this factor to be carefully considered in the development and study of CF algorithms, and give several directions for future work. 
\end{abstract}


\begin{CCSXML}
<ccs2012>
   <concept>
       <concept_id>10010147.10010257</concept_id>
       <concept_desc>Computing methodologies~Machine learning</concept_desc>
       <concept_significance>500</concept_significance>
       </concept>
   <concept>
       <concept_id>10010147.10010178</concept_id>
       <concept_desc>Computing methodologies~Artificial intelligence</concept_desc>
       <concept_significance>500</concept_significance>
       </concept>
   <concept>
       <concept_id>10003120.10003121</concept_id>
       <concept_desc>Human-centered computing~Human computer interaction (HCI)</concept_desc>
       <concept_significance>300</concept_significance>
       </concept>
 </ccs2012>
\end{CCSXML}

\ccsdesc[500]{Computing methodologies~Machine learning}
\ccsdesc[500]{Computing methodologies~Artificial intelligence}
\ccsdesc[300]{Human-centered computing~Human computer interaction (HCI)}

\keywords{counterfactual explanation, interpretability, societal impacts of AI}
\begin{teaserfigure}
  \vspace{-0.13in}
  \includegraphics[width=\textwidth]{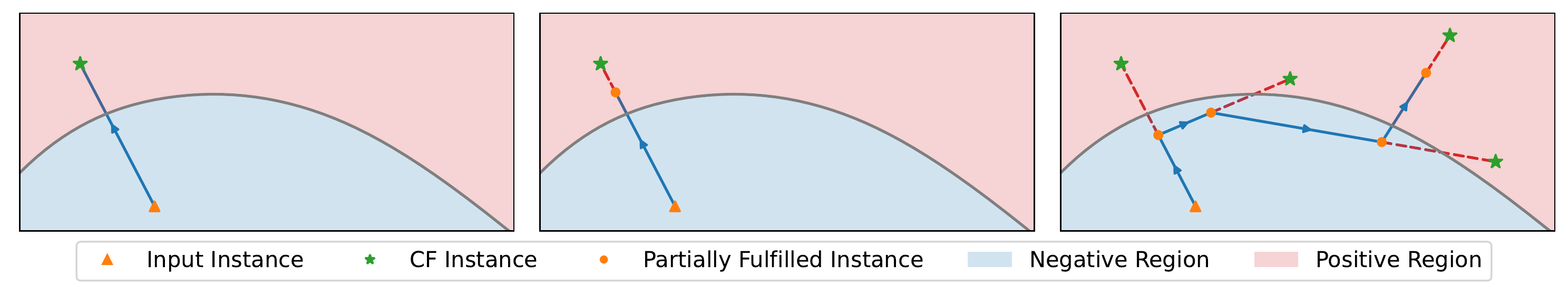}
  \vspace{-0.25in}
  \caption{Left: a counterfactual (CF) explanation (green star) of an negative input instance (orange triangle) is often assumed to be fulfilled (i.e., achieved) completely, thus guaranteeing a positive subsequent prediction. In this paper, we consider \textit{partial fulfillment}, where the subject, by choice or by necessity, stops in the middle (orange circle). Middle: if the CF explanation is far from the decision boundary, then a partial fulfillment near the goal could be sufficient for a positive prediction while saving improvement cost. Right: however, CF algorithms often use local gradient ascent and/or randomized search, which could result in a long and winding path towards the final positive prediction and much higher total cost. }
  \label{fig:1}
  \vspace{0.06in}
\end{teaserfigure}

\maketitle

\setlength{\parskip}{0pt plus 0.1pt minus 0.1pt}

\section{Introduction}

Recently, machine learning models have been increasingly deployed in high-stakes domains in finance, law and medicine, performing tasks such as loan approval \citep{german1994}, recidivism prediction \citep{larson2016we} and medical diagnosis \citep{johnson2016mimic}. For these domains, the reason why a particular prediction is made is often as important as the prediction itself, especially since most of the high performing models, such as neural networks and random forests, are black-box in nature. In some jurisdictions, the ``right to explanation'' is even legally required for people receiving adverse model predictions (e.g., mortgage application denial) to understand the reason and available recourses. 

For these purposes, counterfactual (CF) explanations, also known in the literature as contrastive explanations or algorithmic recourses, have been a popular choice due to their desirable properties in human psychology and cognition theories \citep{miller2019explanation}. For a particular input $x$ with a certain model prediction $\tilde y$, its CF explanation is another input similar to $x$ but with a prediction $\tilde y'$ different from $\tilde y$. Thus, this explanation indicates how the input would need to change in order for the model prediction to also change, as shown in Fig.~\ref{fig:1} (left). When $\tilde y$ is a negative prediction (e.g., mortgage application denial) and $\tilde y'$ is a positive one (e.g., application approval), this CF explanation essentially gives as a direction for the applicant to improve their situation and get the application approved the next time, given that it is feasible (e.g., not changing immutable features such as gender and race), which automatically avoids the unfaithfulness problem of many feature attribution explanations \citep{ribeiro2016should, lundberg2017unified, simonyan2013deep} where the salient features are not really important for the model's decision making \citep{yang2019benchmarking, zhou2021feature, adebayo2021post}. 

Over the past few years, there have been many investigations into various properties of these counterfactual explanations, such as their validity, action feasibility and cost, stability with respect to input perturbations and model updates, and agreement with the underlying causal mechanism, which are reviewed in Sec.~\ref{sec:related-work}. Taken as a whole, they generate quality profiles for various CF explanation algorithms and establish their relative strengths and weaknesses. 

In this paper, we propose a novel aspect of these explanations, which, to the best of our knowledge, has not been studied before. Specifically, in real life scenarios, the subject of the prediction (e.g., the mortgage applicant) may not completely fulfill the CF explanation for various reasons. First, the subject may not be able to do so. For example, the CF explanation requires the subject to pay down all four credit card accounts, but they can only pay down two of them with their current saving. Second, the subject may want to take their chance and get a favorable outcome with less effort. For example, the CF explanation recommends the subject to increase their monthly salary by \$1,000 but they wonder if an increase of just \$800 would be sufficient. Last, the subject may misunderstand the wording of a CF explanation (e.g., a bullet list in the notice of denial) as taking any \textit{subset} of actions rather than all of them. In all cases, we have a \textit{partial} fulfillment of the CF explanation followed by a re-query for the model prediction and CF explanation, repeating until a positive prediction is obtained. We term this process as the \textit{iterative partial fulfillment} (IPF) of CF explanations. 

How does IPF, compared to a one-shot complete fulfillment, affect the subject welfare? In particular, does the subject need to make more total improvement under IPF? Intuitively, the effect can be positive, negative or neutral. 

A positive effect could result from CF algorithms that generate instances landing well into the positive model prediction region. A less complete fulfillment (e.g., a salary increase of \$800 rather than the recommended \$1,000) is still sufficient, allowing the subject to incur a lower cost of improvement, as shown in Fig.~\ref{fig:1} (middle). 

By contrast, if the initial partial fulfillment (e.g., paying down two out of the four recommended credit card accounts) is unsuccessful, the new CF explanation may suggest some other factors to change and effectively ``reset'' the progress, resulting in a larger total cost of improvement, as shown in Fig.~\ref{fig:1} (right). In the most extreme case, an oscillation may even occur among the series of explanations, leaving the subject stuck in an infinite loop. 

Last, if, for an input $x$ and its CF explanation $x'$, the CF explanations for all partially fulfilled input states along the trajectory of $x\rightarrow x'$ are also the same counterfactual explanation $x'$, and no partial fulfillment results in a positive prediction, then the total improvement cost under IPF is the same as that under one-shot complete fulfillment, since both will lead to the subject achieving $x'$ in the end. Such as scenario is possible if the CF algorithm works by either finding the exact closest input with a different target prediction or returning a CF instance from a small set of candidates. 

In practice, the specifics of a CF algorithm, such as the optimization method and considerations for stochasticity and diversity, determine the net effect of these three possibilities, making certain CF algorithms preferable to others from the IPF welfare perspective. 

In this paper, we investigate this problem by first formalizing the notion and implementation of IPF in Sec.~\ref{sec:formalization}. Then in Sec.~\ref{sec:theoretical} we theoretically prove that certain CF algorithms can exhibit positive, negative and neutral effects on IPF welfare (i.e., total improvement cost compared to the one-shot complete fulfillment), as conceptually explained above. Empirically, in Sec.~\ref{sec:empirical} using two financial datasets, three CF algorithms and four CF generation configurations per algorithm (for a total of 24 setups), with the widely used \diceml{} package\footnote{\url{https://github.com/interpretml/DiCE}}, we confirm that the generated explanations indeed possess different IPF characteristics. Thus, from these two pieces of evidence, we argue that an IPF assessment should be part of a comprehensive evaluation of CF algorithms. Finally, in Sec.~\ref{sec:discussion-conclusion}, we discuss the broader technical and societal implications of IPF and its future directions. 

\section{Related Work}
\label{sec:related-work}

In this section, we discuss works on the algorithms and properties of counterfactual explanations, with focus on impact, recency and relevance to IPF. For much more detailed discussions, we refer the readers to standalone surveys, such as those by \citet{byrne2019counterfactuals}, \citet{guidotti2022counterfactual}, \citet{keane2021if} and \citet{verma2020counterfactual}. 

CF explanations are popularized by \citet{wachter2017counterfactual}, who proposed a gradient ascent algorithm to search for a counterfactual instance that both achieves the target model prediction and is close to the original input being explained. Subsequent works have extended the basic idea to make CF explanations diverse \citep{mothilal2020dice, russell2019efficient}, in-distribution \citep{van2021interpretable, poyiadzi2020face}, aware of the causal mechanism \citep{karimi2021algorithmic}, and less susceptible to gaming \citep{chen2020linear}. In addition, different optimization methods have also been proposed, such as those that do not require model differentiability or gradient access \citep{mothilal2020dice, pawelczyk2020learning}, as well as those based on integer programming \citep{ustun2019actionable, russell2019efficient}, constraint satisfaction \citep{karimi2020model} and optimal transport \citep{de2021transport}. 

A recent line of work aims to generate a \textit{sequence} of instances from the input to the final CF explanation \citep{ramakrishnan2020synthesizing, kanamori2021ordered, naumann2021consequence, verma2022amortized}. This sequence provides an explicit path for the subject to follow, and is argued to be more user-friendly and actionable. While our proposed IPF setup also results in a sequence of explanations, it is fundamentally different in both goals and constraints. In sequential generation, the explanation algorithm has full control of the generated sequence and only the last sequence element needs to be the CF explanation (i.e., inducing the target prediction). By comparison, in IPF, the algorithm needs to work with whatever partially fulfilled instance provided by subject and generate a valid CF explanation. 

\citet{kenny2021generating} and \citet{aryal2023even} proposed to generate semi-factual explanations, defined as data instances which move towards the decision boundary but have not crossed it. These explanations could be used to construct ``even if'' explanations: e.g., \textit{even if the down payment is \$10,000 more, the mortgage would still not be approved}. A partially fulfilled CF in IPF and a semi-factual instance are both on the way to some CF instance, but they are otherwise distinct concepts -- one is an intermediate state of the subject and the other is an explanation. 

On the evaluation and analysis side, various properties of CF explanations have been proposed. The two core desiderata of CF explanations are validity and feasibility. The former is just the success rate of the CF generation, while the latter, conceptually defined as the ease for the subject to follow the CF recommendation, is much more nuanced. Different approaches have been proposed to enforce and evaluate it, such as ensuring a close distance to the original data point \citep{wachter2017counterfactual}, lying in a high-density region of the data distribution \citep{van2021interpretable}, satisfying causal constraints \citep{karimi2021algorithmic}, and respecting custom limitations in modifying feature values \citep{mothilal2020dice, ustun2019actionable}. 

Most relevant to our IPF proposal are two notions of stability. The first is with respect to input perturbations, where \citet{dominguez2022adversarial} and \citet{virgolin2023robustness} found that a given input can be adversarially but minimally perturbed into an instance with a very different CF explanations. \citet{maragno2023finding} proposed a robust optimization formulation to find stable CF explanations. \citet{slack2021counterfactual} demonstrates that a model could be trained to make this behavior more prevalent and hide discrimination issues. The second one is with respect to the model update, where a new model is trained on an updated version of the dataset. \citet{rawal2020algorithmic} found that many CF algorithms are often unstable under model update in that very different explanations are generated for the new model and the original ones are no longer valid, and \citet{upadhyay2021towards} proposed an algorithm to find CF instances that are stable under model update. 

From this perspective, our IPF property can be considered as a third notion of stability: the stability of CF explanations for inputs along the path of improvement. If the CF explanations are stable, then the subject will follow a mostly consistent path of improvement, while if not, the subject may be given unrelated or even contradicting recommendations after every partial fulfillment. 

\section{Iterative Partial Fulfillment}
\label{sec:formalization}
\subsection{Background}
\label{sec:background}

In this section, we formalize the concept of iterative partial fulfillment (IPF) of CF explanations. Due to the variety of real world human behaviors, there are many ways to formalize IPF. As we are the first to do so, we provide and analyze one canonical setup, and discuss other design choices and extensions in Sec.~\ref{sec:discussion-conclusion}. 

Let $\mathcal X$ be the input space with $D$ features, and $\mathcal X_d$ for $d\in \{1, ..., D\}$ be the set of values for the $d$-th feature. We consider categorical and numerical features, where $X_d$ is a finite set for the former and (a subset of) $\mathbb R$ for the latter. Thus, an $x\in\mathcal X$ can be written as $(x_1, ..., x_D)$ with $x_d\in \mathcal X_d$. For notational simplicity, we restrict ourselves to binary classification tasks, and represent the model prediction function as $m: X\rightarrow [0, 1]$ that returns the predicted probability of the positive class. Thus, $m(x)\geq 0.5$ indicates a positive prediction. In the ensuing discussion, we consider negatively predicted input instances $m(x) < 0$ and their positively predicted counterfactuals $m(x')\geq 0.5$. 

We denote a CF explanation algorithm as $A: \mathcal X\rightarrow \mathcal X$, which takes an input instance and returns another instance. Since some algorithms are stochastic, we allow $A(x)$ to return a random $x'$ sampled from the corresponding distribution. In addition, some algorithms generate a set of diverse CF explanations, and the subject chooses one of them as the goal using some strategy, such as picking the most similar one or selecting one uniformly at random. In this case, we let $A(\cdot)$ to manage this CF selection procedure so that it always returns a single (but possibly stochastic) CF explanation. To simplify notation, we write $x'=A(x)$ if $A$ is deterministic on $x$ and $x'\sim A(x)$ for a sampled value. 

For $x$ and $x'$, to represent the cost of change between the two, we define a cost metric 
\begin{align}
c(x, x')=\sum_{d=1}^D c_d(x_d, x_d'),
\end{align}
where $c_d(x_d, x_d')$ is the per-feature cost. If the feature is categorical, we have $c_d(x_d, x_d') \allowbreak = \mathbbm 1\{x_d\neq x_d'\}$. Otherwise, we have $c_d(x_d, x_d')=|F_d(x_d) - F(x_d')|$ where $F$ is cumulative distribution function of the feature values, following \citet{ustun2019actionable}, to account for the feature value density, with the maximal change incurring a cost of 1. Given a sequence of $N$ instances $\vec x = (x^{(1)}, ..., x^{(N)})$, the total cost for this sequence is the sum of pairwise neighbor costs $c(\vec x) = \sum_{n=1}^{N-1} c(x^{(n)}, x^{(n+1)})$. 

\subsection{Partial Fulfillment}
We now formalize the partial fulfillment as follows. 

\begin{definition}[$u$-partial fulfillment]
For current state $x$ and goal state $x'$ (e.g., as generated by the CF algorithm), the $u$-partial fulfillment $w\in\mathcal X$, with $u\in [0, 1]$, is generated by the following operation on each feature: 
\begin{itemize}[leftmargin=3.5ex]
    \item if the feature is continuous, the new feature value $w_d$ is an interpolation between the two feature values: $(1-u)\cdot x_d + u\cdot x_d'$, except that when $|x_d-x'_d|\leq \epsilon$, the new value is $x_d'$; 
    \item if the feature is categorical, the new feature value $w_d$ takes $x_d$ with probability $1-u$ and $x_d'$ with probability $u$. 
\end{itemize}
Since categorical feature values are generated stochastically, we use $\phi(x, x', u)$ to denote the distribution of partial fulfillment $w$. 
\end{definition}
Conceptually, from the subject's perspective, when partially fulfilling $x'$ from $x$, at an effort level $u$, for every continuous feature, they will move an amount proportional to $u$ towards the goal feature value, and for every categorical feature, they will choose to make the change with probability $u$. Thus, the partial fulfillment result is stochastic as long as there is at least one categorical feature value change required. A technical exception is put on continuous features, where the partially fulfilled value is set to the CF value if the value difference is small. This ensures the success of IPF when the CF instance lies exactly on the decision boundary. 

\begin{algorithm}[!t]
\textbf{Input}: initial input $x$, model $m$, CF explanation algorithm $A$, fulfillment effort level $u$, maximum number of iterations $T$\;
$t \gets 0$\;
$\vec x\gets [x]$\;
\While{$m(x) < 0.5$ and $t<T$}{
$x'\gets A(x)$\;
$x\gets \phi(x, x', u)$\;
$\vec x.\mathrm{append}(x)$\;
$t\gets t + 1$\;
}
\textbf{return} $\vec x$\; 
\caption{The iterative partial fulfillment (IPF) process.}
\label{alg:ipf}
\end{algorithm}

Given this partial fulfillment definition, we model the iterative partial fulfillment (IPF) process in Alg.~\ref{alg:ipf}. The subject starts with an input $x$, and repeatedly requests a counterfactual explanation to partially fulfill, until receiving a positive prediction or reaching a maximum number of iterations. The algorithm returns $\vec x$, a sequence of states that the subject has been. For effort level $u$, maximum number of iterations $T$, model $m$ and CF algorithm $A$, we use $\xi(x, u, T, m, A)$ to represent the distribution of realized state trajectories $\vec x$. When it is clear from the context, we omit some of the input arguments, such as $m$. The most direct measure of subject welfare under IPF is the total improvement cost $c(\vec x)$. Other metrics include final success rate and number of steps. If we are interested in the fairness implications of IPF (i.e., whether one demographic group is disproportionately affected by IPF), we can also compute these metrics separately for each group, as we conduct in Sec.~\ref{sec:empirical}. 

\section{Theoretical Analysis}
\label{sec:theoretical}
\subsection{IPF Stability}

Does IPF always increase or decrease the total improvement costs? As we demonstrate in this section, its effects on different CF algorithms are different. First, we formally define the concept of IPF stability discussed at the end of Sec.~\ref{sec:related-work}, which is a sufficient condition for cost preservation (i.e., IPF does not increase the total cost). 

\begin{definition}[IPF stable]
    A CF algorithm $A$ is IPF stable at $x$ if 
    \begin{enumerate}[leftmargin=4.5ex]
        \item $A$ is deterministic at $x$, and
        \item $\forall \, w\in \Phi(x, A(x))$, $A$ is deterministic at $w$ and $A(w)=A(x)$. 
    \end{enumerate}
    A CF algorithm $A$ is IPF stable globally if it is IPF stable at all $x\in\mathcal X$. 
\end{definition}

For IPF stable CF algorithms, we are assured that IPF never makes the total cost higher, compared to one-shot complete fulfillment. 

\begin{theorem}
If a CF algorithm $A$ is IPF stable at $x$, then for all $u$ and $T$, $\mathbb E_{\vec x\sim \xi(x, u, T, \allowbreak A)}[c(\vec x)] \allowbreak \leq c(x, A(x))$. 
\end{theorem}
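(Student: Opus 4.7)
The plan is to exploit IPF stability to show that the target CF stays at the fixed value $x' := A(x)$ throughout the trajectory, and then bound the total cost pathwise, feature by feature. Write the realized trajectory as $\vec x = (x^{(1)}, \ldots, x^{(N)})$ with $x^{(1)} = x$.

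First, I would show by induction on $n$ that every iterate $x^{(n)}$ lies in $\Phi(x, x')$, so that by IPF stability $A(x^{(n)}) = x'$ deterministically and each update is $x^{(n+1)} \sim \phi(x^{(n)}, x', u)$. This reduces to checking that $\Phi(x, x')$ is closed under partial fulfillment toward $x'$, coordinate by coordinate: for a continuous feature $d$, the update $(1-u)\,x_d^{(n)} + u\,x_d'$ is a convex combination of a point already on the segment $[x_d, x_d']$ with the endpoint $x_d'$, hence stays on the segment; for a categorical feature $d$, the support of $\phi$ at step $n$ is $\{x_d^{(n)}, x_d'\}$, so if $x_d^{(n)} \in \{x_d, x_d'\}$ then $x_d^{(n+1)} \in \{x_d, x_d'\}$ as well.

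Second, I would bound the cost pathwise per feature. For a continuous feature $d$, the recurrence $x_d^{(n+1)} - x_d' = (1-u)(x_d^{(n)} - x_d')$ (outside the $\epsilon$-clipping regime, which only makes the iteration reach $x_d'$ sooner) makes $x_d^{(n)}$ monotone toward $x_d'$, so $F_d(x_d^{(n)})$ is monotone between $F_d(x_d)$ and $F_d(x_d')$ and the pairwise distances telescope:
\[
\sum_{n=1}^{N-1} c_d(x_d^{(n)}, x_d^{(n+1)}) \;=\; \bigl| F_d(x_d^{(N)}) - F_d(x_d) \bigr| \;\le\; \bigl| F_d(x_d') - F_d(x_d) \bigr| \;=\; c_d(x_d, x_d').
\]
For a categorical feature $d$, once the coordinate reaches $x_d'$ the update becomes deterministically $x_d'$, so the coordinate transitions at most once along the whole trajectory (and never if $x_d = x_d'$); hence $\sum_n c_d(x_d^{(n)}, x_d^{(n+1)}) \le \mathbbm{1}\{x_d \ne x_d'\} = c_d(x_d, x_d')$ pathwise. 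Summing over $d$ gives $c(\vec x) \le c(x, x')$ for every realization, and taking expectation yields $\mathbb E[c(\vec x)] \le c(x, A(x))$.

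The main obstacle I anticipate is the closure argument in the induction. The definition of IPF stability only supplies $A(w) = A(x)$ for $w \in \Phi(x, A(x))$ generated directly from $x$; extending the property to every iterate of the IPF loop requires interpreting $\Phi(x, x')$ as the set of partial fulfillments of $x$ toward $x'$ reachable at \emph{any} effort level, so that it is invariant under further partial fulfillment toward $x'$. Once this is in place, the telescoping and indicator bounds above are immediate, and the stopping condition $m(x^{(n)}) \ge 0.5$ in Alg.~\ref{alg:ipf} plays no role in the cost argument — the inequality holds regardless of when the loop terminates.
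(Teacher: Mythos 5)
Your proposal is correct and follows the same route as the paper's argument, which is only a two-sentence sketch: IPF stability keeps the recommended counterfactual fixed at $A(x)$ throughout, so the path cost telescopes to at most $c(x, A(x))$. Your write-up simply supplies the details the paper omits — the inductive closure of $\Phi(x, A(x))$ under further partial fulfillment and the per-feature telescoping/at-most-one-transition bounds — and your reading of $\Phi(x, x')$ as the set of partial fulfillments reachable at any effort level is exactly the interpretation the paper's definition intends.
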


The proof is straightforward. At every iteration of IPF, the same CF explanation is given. Thus, the total improvement cost is upper bounded by $c(x, A(x))$. If the model gives a positive prediction in some intermediate step (or $T$ is not large enough to achieve $A(x)$ or a positive prediction), the total improvement cost is strictly less, which could happen when $A$ is configured to be ``conservative'' and gives a CF instance of high model confidence. 

\subsection{Cost-Preserving/Decreasing CF Under IPF}

Do IPF stable CF algorithms exist? Obviously, a constant-valued CF algorithm that always produces the same CF instance $A(\cdot)=x'$ is stable, but this serves as a terrible CF explanation for most of the dissimilar input instances. More usefully, we show that the optimal cost CF algorithm is also stable. 

\begin{theorem}
For $p\geq 0.5$, the optimal cost CF algorithm 
\begin{align}
\aoc(x)=\argmin_{x': m(x')\geq p} c(x, x'),
\end{align}
which gives the instance closest to $x$ with model prediction at least $p$ (using deterministic tie-breaking if necessary), is IPF stable globally. 
\end{theorem}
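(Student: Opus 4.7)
The plan is to show that for any partial fulfillment $w$ of $x^* := \aoc(x)$ starting from $x$, the same $x^*$ remains the (uniquely tie-broken) optimizer of the cost-constrained problem anchored at $w$. The key geometric observation I would exploit is that $w$ sits \emph{on the path} from $x$ to $x^*$ feature by feature: for continuous $d$, $w_d = (1-u)x_d + u x^*_d$ (or $x^*_d$ in the $\epsilon$-snapping case), so $F_d(w_d)$ lies between $F_d(x_d)$ and $F_d(x^*_d)$; for categorical $d$, $w_d \in \{x_d, x^*_d\}$ with probability one. In both cases this yields the per-feature identity $c_d(x_d, x^*_d) = c_d(x_d, w_d) + c_d(w_d, x^*_d)$, and summing over $d$ gives the \emph{path equality} $c(x, x^*) = c(x, w) + c(w, x^*)$.

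Next I would establish (straightforwardly, since $|\cdot|$ and $\mathbbm{1}\{\cdot\neq\cdot\}$ are metrics on each coordinate) that $c$ satisfies the triangle inequality $c(x, x'') \le c(x, w) + c(w, x'')$ for all $x'' \in \mathcal{X}$. Combined with the path equality and the optimality of $x^*$ at $x$, for any feasible $x''$ (i.e., $m(x'') \ge p$) I obtain
\begin{equation*}
c(x, w) + c(w, x^*) \;=\; c(x, x^*) \;\le\; c(x, x'') \;\le\; c(x, w) + c(w, x''),
\end{equation*}
so $c(w, x^*) \le c(w, x'')$. This shows $x^* \in \argmin_{x'': m(x'')\ge p} c(w, x'')$, i.e., $x^*$ is an optimizer of the cost problem anchored at $w$.

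The main obstacle is handling the tie-breaking cleanly so that $\aoc(w)$ equals $x^*$ rather than merely lying in the same optimal set. I would argue that the optimum set at $w$ is contained in the optimum set at $x$: if $x''$ also achieves $c(w, x'') = c(w, x^*)$, then chaining with the path equality and triangle inequality forces $c(x, x'') = c(x, x^*)$ as well, so $x''$ is optimal at $x$. Because $\aoc$'s deterministic tie-breaker already selects $x^*$ from the (larger) optimal set at $x$, and $x^*$ itself lies in the (smaller) optimal set at $w$, any sensible tie-breaker that depends only on a fixed total order on $\mathcal{X}$ will again return $x^*$ at $w$. This yields both required conditions for IPF stability at $x$: $\aoc$ is deterministic at every such $w$, and $\aoc(w) = \aoc(x)$. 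Since $x$ was arbitrary, global IPF stability follows.
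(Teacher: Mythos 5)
Your proof is correct, but it takes a genuinely different and, I would say, cleaner route than the paper's. The paper partitions the search space into $2^D$ ``quadrants'' indexed by the sign pattern $s_d=\sgn(x_d'-x_d)$, proves that partial fulfillment preserves optimality of $x^*$ within its own quadrant by a telescoping computation on $\sum_d F_d(\cdot)$, gives a separate (and somewhat informal) argument for across-quadrant optimality based on features whose required direction of change differs, and then handles categorical features with yet another case analysis on whether the fulfillment reduces the per-feature cost for $x^*$ at least as fast as for any competitor. You replace all of this with two facts: the per-coordinate path equality $c_d(x_d,x^*_d)=c_d(x_d,w_d)+c_d(w_d,x^*_d)$ (which holds because $F_d$ is monotone and $w_d$ interpolates, and trivially for categorical coordinates since $w_d\in\{x_d,x^*_d\}$), and the triangle inequality for the sum-of-coordinate-metrics cost $c$. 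The resulting chain $c(x,w)+c(w,x^*)=c(x,x^*)\leq c(x,x'')\leq c(x,w)+c(w,x'')$ immediately gives $c(w,x^*)\leq c(w,x'')$ over the (anchor-independent) feasible set $\{x'':m(x'')\geq p\}$. What your approach buys is generality and rigor: the paper only remarks after the proof that the result ``also applies to $l_1$ distance functions \ldots or with $F_d$ being arbitrary monotonic functions,'' whereas your argument makes this immediate, since it only uses that $c$ is a sum of coordinate (pseudo)metrics along which partial fulfillment is geodesic. You also address tie-breaking explicitly via the containment of the optimal set at $w$ in the optimal set at $x$, which the paper silently ignores; the one mild assumption you need there --- that the deterministic tie-breaker is induced by a fixed total order (or is otherwise consistent under restriction to subsets containing its choice) --- is a reasonable reading of the theorem's ``deterministic tie-breaking'' and is needed by the paper's argument as well.
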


\begin{proof}
We first consider the case of all numerical features and no categorical features. Recognizing the feature-wise absolute value CDF distance function $c_d=|F_d(x_d') - F_d(x_d)|$, we define sign flag $s_d=\sgn(x_d' - x_d)$, and have 
\begin{align}
    c(x, x')=\sum_{d=1}^D s_d(F_d(x_d') - F_d(x_d)).
\end{align}
Therefore, the search over the best CF can be reduced to that in $2^D$ ``quadrants,'' with one value of $s=(s_1, ..., s_D)$ specifying one quadrant $Q_s$. Denote the globally optimal CF as $x^*$. We need to show that IPF preserves the optimality of $x^*$ within the quadrant and across different quadrants. 

For within-quadrant optimality, without loss of generality, suppose that $x^*$ lives in $Q_s$ with $s=(1, ..., 1)$. Consider the new state $w=(1-u)x+ux^*$ resulting from the partial fulfillment. For all $x'\in Q_s$, 
\begin{align}
    &c(x^*, x)\leq c(x', x)\\
    \implies&\sum_d F_d(x^*) - F_d(x) \leq \sum_d F_d(x') - F_d(x) \\
    \implies &\sum_d F_d(x^*) \leq \sum_d F_d(x')\\
    \implies &\sum_d F_d(x^*) - F_d(w) \leq \sum_d F_d(x') - F_d(w)\\
    \implies &\sum_d c_d(x^*, w) \leq \sum_d c_d(x', w)\\
    \implies &c(x^*, w) \leq c(x', w). 
\end{align}
The last line establishes the within-quadrant optimality of $x^*$ for $w$. For across-quadrant optimality, consider the $s^*$ for the quadrant of $x^*$ and $s'$ for that of an CF instance $x'$ in a different quadrant. For a feature $d$ such that $s^*_d=s'_d$, $w_d$ makes the same amount of improvement towards both CFs (except when $w$ overshoots with respect to $x'_d$, which offsets the improvement on $x'_d$), while for $d$ such that $s^*_d\neq s'_d$ (which must exist because $s^*\neq s'$), the improvement towards $x^*_d$ strictly makes the distance to $x_d'$ worse. Thus, if $x^*$ is optimal for $x$ across all quadrants, it is still optimal for $w$. 

Combining within-quadrant and across-quadrant optimality together, we see that IPF preserves the optimality of $\aoc(x)$ (for inputs of all numerical features). 

For a categorical feature $d$ that needs change (i.e., $x^*_d\neq x_d$), if $w_d=x_d$, it does not change the feature-wise cost $c_d$ for any target instance (including $x^*$), while if $w_d=x^*_d$, it reduces the cost for $x^*$ by 1, and it reduces that for any other $x'$ by at most 1, if $x'_d=x^*_d$). Thus, the cost reduction for $x^*$ is as fast as any other $x'$, so adding the cost for categorical features to the overall cost $c$ does not affect the optimality of $x^*$. This completes the proof. 
\end{proof}

With similar proofs, the theorem also applies to $l_1$ distance functions, including the case of different features scaled differently (e.g., by respective mean absolute deviation as used by \citet{wachter2017counterfactual}), or with $F_d$ being arbitrary monotonic functions. These variations greatly increases the generality of the theorem. 

This algorithm is considered as the gold standard by many works that propose approximate procedures due to the intractability of the exact optimization, such as local gradient ascent \citep{wachter2017counterfactual} or randomized search \citep{mothilal2020dice, pawelczyk2020learning}. Hence, we see that IPF is not a concern in the ideal case. In fact, for conservative \aoc{} with $p > 0.5$, it is likely that the total cost of IPF is smaller due to early stopping. 

Moreover, the result extends easily to look-up based CF algorithms, as defined below. 
\begin{theorem}
A look-up based CF algorithm
\begin{align}
    A_{\mathrm{LU}}(x)=\argmin_{x'\in S} c(x, x'), 
\end{align}
which selects the instance closest to $x$ from a (finite) set of candidates $S$ (using deterministic tie-breaking if necessary), is IPF stable globally. 
\end{theorem}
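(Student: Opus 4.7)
The plan is to observe that the proof of the previous theorem makes no essential use of the particular feasibility constraint $\{x' : m(x') \geq p\}$: the argument that $x^* = \argmin_{x' \in C} c(x, x')$ remains optimal at any partial fulfillment state $w$ depends only on (a) the sum-of-per-feature-CDF-distance structure of $c$ and (b) the fact that the comparison is taken over a fixed candidate set $C$. Replacing $C$ by the finite look-up set $S$ therefore immediately transports the geometric argument to $A_{\mathrm{LU}}$, and the only thing that needs re-verification is the bookkeeping around determinism.

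Concretely, I would first treat the case of all continuous features. Partition $S$ by the sign vector $s_d = \sgn(x^*_d - x_d)$ into at most $2^D$ subsets $S \cap Q_s$, where $x^* = A_{\mathrm{LU}}(x)$. For the within-quadrant step, take any $x' \in S \cap Q_{s^*}$; optimality of $x^*$ at $x$ yields $\sum_d F_d(x^*_d) \leq \sum_d F_d(x'_d)$, so subtracting $\sum_d F_d(w_d)$ from both sides and noting that $w_d$ lies between $x_d$ and $x^*_d$ (so the left-hand side equals $c(x^*, w)$ exactly, while the right-hand side is at most $c(x', w)$ by the triangle-like bound $|F_d(x'_d) - F_d(w_d)| \geq F_d(x'_d) - F_d(w_d)$) gives $c(x^*, w) \leq c(x', w)$. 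For the across-quadrant step, for any $x' \in S$ with $s' \neq s^*$ there is a feature $d$ with $s^*_d \neq s'_d$; the partial fulfillment moves $w_d$ toward $x^*_d$ and thereby strictly away from $x'_d$, while for features with matching signs the reduction in cost toward $x^*$ is at least as large as toward $x'$. Categorical features are absorbed into the same framework by the same argument already given in the previous proof: each stochastic realization $w_d \in \{x_d, x^*_d\}$ reduces $c_d(\cdot, x^*)$ by at least as much as it reduces $c_d(\cdot, x')$ for any other $x' \in S$.

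The one point that really warrants care — and thus the only plausible obstacle — is showing that the \emph{deterministic tie-breaking} still returns $x^*$ at $w$, so that $A_{\mathrm{LU}}(w) = A_{\mathrm{LU}}(x)$ rather than merely $c(w, A_{\mathrm{LU}}(w)) = c(w, A_{\mathrm{LU}}(x))$. The within-quadrant inequalities above are weak, but they are preserved \emph{as equalities} exactly when the original comparison was an equality: $\sum_d F_d(x^*_d) = \sum_d F_d(x'_d)$ at $x$ iff the same equality holds after subtracting $\sum_d F_d(w_d)$. Hence the set of candidates tied with $x^*$ at $w$ is a subset of those tied with $x^*$ at $x$, and any fixed total order on the finite set $S$ used to break ties will therefore select the same $x^*$. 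Across quadrants, strict optimality at $x$ strengthens (not weakens) at $w$, so no new ties can appear. This closes the argument, and since the entire reasoning holds for every $x \in \mathcal{X}$, $A_{\mathrm{LU}}$ is IPF stable globally.
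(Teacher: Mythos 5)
Your proposal is correct and follows essentially the same route as the paper, which simply declares the proof ``analogous'' to that of the optimal-cost theorem; you have spelled out that analogy by substituting the finite candidate set $S$ for the feasibility region and re-running the quadrant decomposition. Your additional verification that deterministic tie-breaking is preserved (because the tie set at $w$ can only shrink relative to that at $x$) is a point the paper leaves implicit, and is a worthwhile refinement rather than a departure.
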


The proof is analogous. A natural choice of $S$ (for a negatively predicted instance $x$) is the set of correctly predicted positive training instances. Indeed, using the training set as a constraint or regularization is a common ingredient in many CF algorithms \citep{van2021interpretable, poyiadzi2020face}, often to make the CF explanations more realistic and thus feasible, while this theorem demonstrates an added benefit of it. 

Putting everything together, we reiterate the central results of this section with the following corollary: 
\begin{corollary}
Both optimal cost and look-up based CF algorithms (\aoc{} and $A_{\mathrm{LU}}$) are IPF stable. 
\end{corollary}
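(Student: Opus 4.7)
The plan is to observe that this corollary is essentially a bookkeeping statement that collects the two preceding theorems, so the proof should simply point back to them and verify that no additional hypothesis is needed to combine the two claims. First I would note that by the IPF stability of $\aoc$ (proved in the theorem just above), for every $x\in\mathcal X$ and every $w\in\Phi(x,\aoc(x))$ we have $\aoc(w)=\aoc(x)$; and by the analogous theorem for $A_{\mathrm{LU}}$, the same determinism-and-invariance property holds with $A_{\mathrm{LU}}$ in place of $\aoc$. Since IPF stability is defined pointwise and the two algorithms satisfy the definition at every $x$, both are globally IPF stable.

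Next I would cross-check that the two arguments really do establish all clauses of the IPF-stable definition for each algorithm: (i) determinism at $x$, which is handled in both theorem statements by the phrase ``using deterministic tie-breaking if necessary''; and (ii) the fact that the same argmin is returned at any partially fulfilled $w$. For $\aoc$, clause (ii) is established by the within-quadrant and across-quadrant optimality arguments in the preceding proof, which show that partial fulfillment toward the optimum $x^*$ never lets a competitor catch up under $l_1$-CDF cost. For $A_{\mathrm{LU}}$, clause (ii) follows from the same monotonicity of pairwise distances along the interpolation segment $x\to x^*$, now restricted to the finite candidate set $S$; since the argument does not use any property of $S$ other than finiteness, it transfers verbatim.

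There is no real obstacle here — the work was done in the two theorems — so the proof can be a two-line deduction. The only thing I would be careful about is making the proof read as a genuine combination rather than a tautology: I would explicitly restate that IPF stability of $\aoc$ and IPF stability of $A_{\mathrm{LU}}$ each follow from the respective theorem, and then remark that the corollary is their conjunction. If one wanted to add value, one could close by noting that the same pattern of proof (monotone decrease of $l_1$-CDF distance along the partial-fulfillment interpolation) is what underlies both results, which foreshadows why approximate or stochastic CF algorithms in the empirical section fail to be IPF stable.
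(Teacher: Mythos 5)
Your proposal is correct and matches the paper's treatment: the corollary is stated there as a summary with no separate proof, following immediately as the conjunction of the two preceding theorems on \aoc{} and $A_{\mathrm{LU}}$. Your cross-check of the determinism and invariance clauses is a reasonable (if unneeded) elaboration of the same argument.
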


\subsection{Cost-Increasing CF Under IPF}

Next, we demonstrate that two popular approximation methods, gradient ascent and randomized search, are prone to increasing the total improvement cost, possibly without limit. 

For differentiable models, gradient ascent is often used from the current input to find a CF instance that offers a good trade-off between the model prediction and distance, sometimes with other considerations. Different works have proposed different objective functions, with the earliest one proposed by \citet{wachter2017counterfactual} as
\begin{align}
    g(x') = \lambda (1 - m(x'))^2 + d_{\mathrm{MAD}}(x, x'), \label{eq:wachter}
\end{align}
where $d_{\mathrm{MAD}}$ is the $l_1$ distance weighted by the inverse median absolute deviation (MAD) per feature, and $\lambda$ controls the trade-off. We define a gradient ascent CF function \aga{} as the one that follows the gradient of $g(\cdot)$ from $x$ to the local minimum or the boundary of $\mathcal X$. If this end state does not achieve the required model prediction $p$, we return a default positive instance (which can be a fixed correctly classified positive training instance). 

It turns out that \aga{} could lead to arbitrarily bad IPF behaviors due to an oscillation phenomenon. 
\begin{theorem}
There exists a model $m$, input instances $x^{(1)}, x^{(2)}$ with all continuous features, and effort level $u$, such that $\phi(x^{(1)}, \allowbreak \aga{}(x^{(1)}), \allowbreak u)=x^{(2)}$ and $\phi(x^{(2)}, \aga{}(x^{(2)}), u)=x^{(1)}$. 
\end{theorem}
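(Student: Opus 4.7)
The plan is to exhibit an explicit two-dimensional instantiation. Setting $u=1/2$ and $x^{(1)}=(-1,0)$, $x^{(2)}=(1,0)$, the identities $\phi(x^{(i)},\aga(x^{(i)}),1/2)=x^{(3-i)}$ reduce, via the linear interpolation rule for $\phi$ on continuous features, to requiring $\aga(x^{(1)})=(3,0)$ and $\aga(x^{(2)})=(-3,0)$: each CF must overshoot the other oscillation point so that the half-way interpolation lands exactly on it. The problem is therefore to design a single differentiable $m$ for which gradient descent of $g(\cdot;x^{(1)})=\lambda(1-m(\cdot))^2+d_{\mathrm{MAD}}(x^{(1)},\cdot)$, initialized at $x^{(1)}$, terminates at $(3,0)$, and analogously for $x^{(2)}$.

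My construction works on the bounded domain $\mathcal X=[-3,3]^2$ with an $m$ that is point-symmetric under $(x_1,x_2)\mapsto(-x_1,-x_2)$, so the $x^{(2)}$ behavior follows from the $x^{(1)}$ behavior by symmetry of $m$ and of $g$. For the $x^{(1)}$ case, I place a narrow channel of high $m$-values along an arc in the upper half-plane from $(-1,0)$ to the boundary point $(3,0)$ (for example, a circular arc through $(1,2)$). Concretely, one can take $m(x)=\exp(-r(x)^2/\sigma^2)\,h(t(x))$, where $t$ parameterizes arc-length along the arc (from $0$ at $x^{(1)}$ to $1$ at $(3,0)$), $r$ is the perpendicular distance to the arc, and $h$ is a smooth increasing profile with $h(0)=0$ and $h(1)=1$. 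With this shape, $m(x^{(1)})\approx 0<p$, $m(3,0)=1\geq p$, and $\nabla m(x^{(1)})$ points tangent to the arc, steering the gradient flow off-axis into the upper half-plane. Choosing $\lambda$ and $h'$ so that the tangential component of $-\nabla g(\cdot;x^{(1)})$ stays strictly positive along the arc interior, the flow travels monotonically in arc-length and strikes the domain boundary at $(3,0)$, where by the definition of $\aga$ it terminates. The $x^{(2)}$ analysis uses the point-symmetric arc in the lower half-plane, yielding $\aga(x^{(2)})=(-3,0)$.

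The principal obstacle is ensuring the flow reaches the \emph{far} peak $(3,0)$ rather than the nearer peak $(-3,0)$ or some intermediate critical point of $g$. In one dimension this is outright impossible --- 1D gradient flow converges to the closest attractor in its initial direction, so the two desired CFs $(3,0)$ and $(-3,0)$ cannot both be realized from nearby starting points on a single axis. The construction fundamentally requires two-dimensional freedom, both to route the flow off-axis via the upper/lower half-plane detour and to terminate it at the domain boundary (thereby avoiding the generic issue that the tangential drive vanishes as $m\to 1$, which would otherwise stop the flow at some interior equilibrium along the arc strictly before $(3,0)$). Secondary technical items --- keeping $m(x^{(i)})<p$, handling the subdifferential of the $\ell_1$-based $d_{\mathrm{MAD}}$ at the initialization $x'=x^{(i)}$ (e.g.\ by taking $\lambda$ large enough that the model term dominates the kink), and verifying strict positivity of the tangential drive along the arc interior --- are all controllable through the choice of $\sigma$, $h$, and $\lambda$, and reduce to elementary one-variable inequalities.
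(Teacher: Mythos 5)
Your overall route is the same as the paper's: the paper proves this theorem by exhibiting a two-dimensional example (its Fig.~2 and the saving/debt illustration) in which the gradient field of $g$ has a valley so that the two inputs' gradient-ascent CFs are ``crossed'' far-away instances, and with $u=1/2$ each partial fulfillment lands exactly on the other input. Your point-symmetric construction with $x^{(1)}=(-1,0)$, $x^{(2)}=(1,0)$ and crossed CFs at $(\pm 3,0)$ is the same idea, and your remark that one dimension cannot realize the oscillation is correct and consistent with the paper's choice of a 2D example.

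There is, however, one step that you assert rather than establish, and it is the step both required identities hinge on: that the descent trajectory of $g(\cdot\,;x^{(1)})$ terminates \emph{exactly} at $(3,0)$. On the ridge of your channel the normal component of $\nabla m$ vanishes while the $d_{\mathrm{MAD}}$ term has a nonvanishing normal component pulling back toward $x^{(1)}$, so the trajectory leaves the arc immediately and runs at a nonzero inward offset (small for large $\lambda$ and small $\sigma$, but not zero). With your circular arc through $(1,2)$, which meets the edge $x_1=3$ only tangentially at $(3,0)$, an inward-offset trajectory generically never reaches the boundary near that point and instead stalls at an interior stationary point (where the drive $2\lambda(1-m)\nabla m$ has decayed, or at the kink $x_2=0$); even with a transversally incident arc the boundary hit would be at $(3,\delta)$ with $\delta\neq 0$. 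Exactness matters here: once $x^{(1)},x^{(2)},u$ are fixed, $\phi(x^{(1)},\aga(x^{(1)}),1/2)=x^{(2)}$ forces $\aga(x^{(1)})=(3,0)$ exactly, and the central symmetry of $m$ only maps the flow from $x$ to the flow from $-x$, so it cannot absorb an inexact landing (redefining $x^{(2)}$ as the realized midpoint destroys the symmetry needed for the return map). The gap is repairable --- for instance, prescribe the potential directly by choosing $V$ with $0\le V-d_{\mathrm{MAD}}(x^{(1)},\cdot)\le\lambda$ and setting $m=1-\sqrt{(V-d_{\mathrm{MAD}}(x^{(1)},\cdot))/\lambda}$ (kept centrally symmetric), so that $g(\cdot\,;x^{(1)})=V$ and its flow can be designed to end at a stationary point placed exactly at $(3,0)$; or tune the arc by a continuity argument so the landing coordinate is exactly zero --- but as written, ``strikes the domain boundary at $(3,0)$'' is the missing piece rather than a secondary technical item.
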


In this case, starting at $x^{(1)}$ and making partial fulfillment with effort level of $u$ results in an oscillation of $x^{(1)}\rightarrow x^{(2)}\rightarrow x^{(1)}\rightarrow x^{(2)}\rightarrow ...$ for $T$ steps. A concrete example is illustrated in Fig.~\ref{fig:theoretical-gradient}, which plots gradient field of the 2-dimensional objective function $g(\cdot)$ as gray arrows pointing in the \textit{ascent} direction. A ``valley'' (blue dashed line) separates the inputs into two regions. We have two instances, represented by orange and green circles. For each instance, the gradient ascent yields the red trajectory to the star marker of the same color. However, starting from the orange circle, a 0.5-partial fulfillment towards the orange star lands exactly on the green circle, whose counterfactual explanation is the green star, but a 0.5-partial fulfillment goes back to the orange circle again. 

\begin{figure}[!t]
    \centering
    \includegraphics[width=0.95\columnwidth]{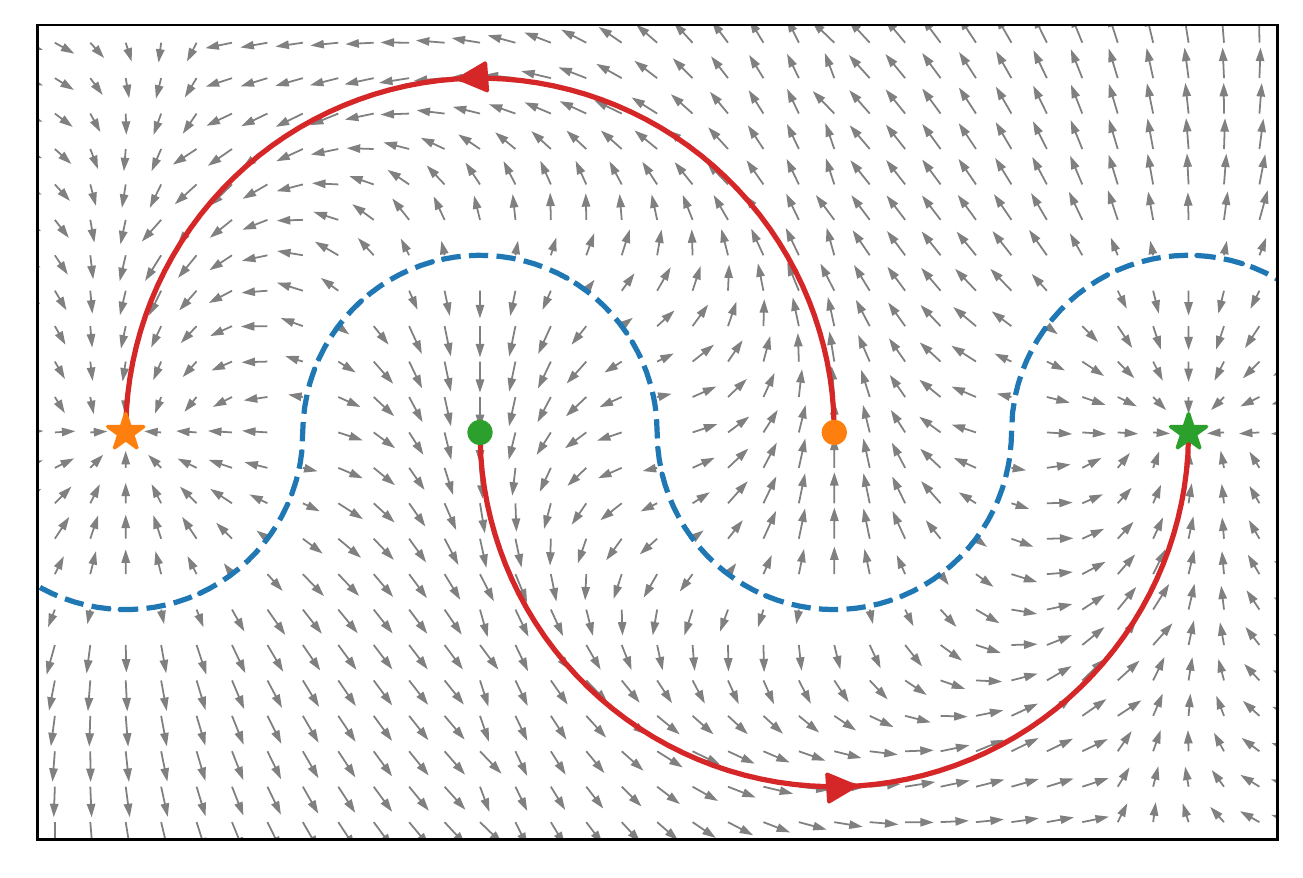}
    \caption{An example illustrating the oscillation behavior of gradient-ascent CF algorithms under partial fulfillment. }
    \label{fig:theoretical-gradient}
\end{figure}

The root cause for this issue is the non-optimality of the gradient ascent algorithm, in that it may only find farther local minima by following the gradient, such that a partial fulfillment (which move in the straight-line path, not along the gradient) could reset the progress. While the above example can be easily solved by caching the CF explanations found so far and returning the closest one if the gradient ascent cannot do better, models trained on real-world datasets with a large number of features may have many local minima in the high-dimensional input space, as evidenced by the prevalence of adversarial examples \citep{goodfellow2014explaining, liu2016delving}, rendering such caching effort mostly futile. 

A realistic example can be constructed as follows for the mortgage approval task. Consider two features, current saving $x_s$ and current debt $x_d$, where the model makes positive predictions on $\langle x_s, x_d\rangle=\langle \$10k, \$1k\rangle$ and $(\$40k, \$4k\rangle$. Now consider $x^{(\mathrm{CF},1)}=\langle \$20k, \$2k\rangle$ and $x^{(\mathrm{CF},2)}=\langle \$30k, \$3k\rangle$. Due to the shape of the objective function, the gradient ascent optimizes $\langle \$20k, \$2k\rangle$ to $\langle \$40k, \$4k\rangle$, and $\langle \$30k, \$3k\rangle$ to $\langle \$10k, \$1k\rangle$. As a result, starting at either $x^{(1)}$ or $x^{(2)}$ leads to an oscillation between the two.

Another popular approach, especially for non-differentiable models, is based on randomized search. Generally speaking, a randomized search algorithm \ars{} draw samples from the input space $\mathcal X$ using some strategy (e.g., uniformly at random or weighted towards the input instance $x$), and returns the best sampled instance according to some objective function (e.g., Eq.~\ref{eq:wachter}). However, this approach is also prone to increasing the total improvement cost under IPF. 

\begin{theorem}
    There exists a model $m$, an input instance $x$, and an effort level $u$, such that 
    \begin{align}
        \mathbb E_{\vec x\sim \xi(x, u, T, m, \ars{})}[c(\vec x)] > E_{x'\sim \ars(x)}[c(x, x')]
    \end{align}
\end{theorem}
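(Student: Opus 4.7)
The plan is to construct a simple one-dimensional model in which the randomized search $A_{RS}$ genuinely returns two distinct counterfactuals with equal probability, and to show that partial fulfillment toward one of them leaves the subject essentially equidistant from both, so that subsequent queries pull in inconsistent directions and accumulate cost without delivering a positive prediction before the horizon $T$.

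Concretely, I take $\mathcal X = \mathbb R$ with a single continuous feature whose marginal is uniform on some bounded interval containing $[-10, 10]$ (so the CDF-based cost reduces to a scaled absolute difference and normalization is immaterial), define $m$ to predict the positive class iff $|x| \geq 10$, fix $x = 0$ and $u = 1/2$, and let $A_{RS}$ draw one candidate uniformly from $\{+10, -10\}$ and return it. This fits the paper's description of a randomized-search algorithm (it samples from the input space using ``some strategy'' and returns the best of its samples, trivial for a single draw), and the one-shot expected cost equals the common distance $C = c(0, \pm 10)$. Tracing the IPF: step 1 lands at $\pm 5$, costing $C/2$ in expectation; the prediction is still negative, so step 2 draws an independent $\pm 10$. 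Conditioning on being at $+5$, the next CF is $+10$ with probability $1/2$ (ending at $7.5$, cost $C/4$) or $-10$ with probability $1/2$ (ending at $-2.5$, cost $3C/4$), giving unconditional step-2 expected cost $C/2$; by symmetry the post-step-2 position lies in $\{\pm 2.5, \pm 7.5\}$, all in the negative region (for $\epsilon$ chosen small enough that the interpolation rule, rather than the boundary snap, applies). Hence for any $T \geq 3$ at least one further iteration must execute with strictly positive expected cost, so the total expected IPF cost strictly exceeds $C = \mathbb E_{x'\sim A_{RS}(x)}[c(x, x')]$.

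The main obstacle I anticipate is not the arithmetic, which is elementary by the symmetry of the construction, but rather arguing cleanly that this two-point oracle qualifies as a randomized search in the informal sense the paper intends, rather than being dismissed as a degenerate toy. I would address this by leaning on the paper's own latitude in the definition of $A_{RS}$ (``using some strategy'', ``some objective function''), and, if a more realistic variant is desired, by replacing the point oracle with a sampling distribution supported on small neighborhoods of $\{+10, -10\}$ combined with an objective function that treats both regions symmetrically; the same zig-zag dynamics and cost inflation persist, so the qualitative conclusion is robust.
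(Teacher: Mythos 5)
Your construction is correct and is essentially the paper's own: the paper likewise takes two fixed counterfactual targets on opposite sides of the input, starts the subject at their midpoint with $u=0.5$, and argues that the randomized choice between them wastes earlier effort (the paper uses inverse-distance-weighted selection and quantifies the blow-up by computing the probability of a consistent path plus a Monte Carlo simulation, whereas you use uniform selection and an explicit expected-cost ledger). If anything, your step-by-step accounting showing the expected cost already reaches $C$ after two iterations and strictly exceeds it by the third makes the strict inequality more rigorous than the paper's informal treatment, but the underlying idea is the same.
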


Intuitively, this theorem should not be surprising: at step $t$ and state $x^{(t)}$, when a new CF goal $x^{(\mathrm{CF}, t+1)}$ is set, some of the effort expended during the previous round of partial fulfillment becomes wasted if the new goal requires a different fulfillment operation from the previous state $x^{(t-1)}$; i.e., $x^{(t)}\notin \Phi(x^{(t-1)}, x^{(\mathrm{CF}, t+1)})$. 

As a simple example, consider a probabilistic CF algorithm $A$ that gives one of two CF explanations, $x^{(\mathrm{CF}, 1)}$ and $x^{(\mathrm{CF}, 2)}$. For an input $x$, let $d_1$ and $d_2$ be the Euclidean distance to them respectively (assuming all continuous features). We have 
\begin{align}
    A(x) = \begin{cases}
x^{(\mathrm{CF}, 1)} \quad & \text{with (unnormalized) probability $d_1^{-1}$, }\\
x^{(\mathrm{CF}, 2)} \quad & \text{with (unnormalized) probability $d_2^{-1}$. }
    \end{cases}
\end{align}
Thus, if we have the initial state starting at the middle of these two CF states, $x=(x^{(\mathrm{CF}, 1)} + x^{(\mathrm{CF}, 2)}) / 2$, with $u=0.5$ (i.e., fulfilling halfway through the CF explanation), then the probability of CF always recommending the same counterfactual is 
\begin{align}
    \frac{3}{4} \cdot \frac{7}{8}\cdot \frac{15}{16}\cdot \frac{31}{32}... \approx 0.58, 
\end{align}
meaning that 42\% of times, there is at least one step that erases the effort of an earlier step. On our earlier mortgage approval example, these two states could represent the two ways of getting approved (high saving and high debt, or low saving and low debt), and a partially fulfilling applicant risks receiving contradictory feedback every time they make an application. 

\begin{figure}[!b]
    \centering
    \includegraphics[width=0.8\columnwidth]{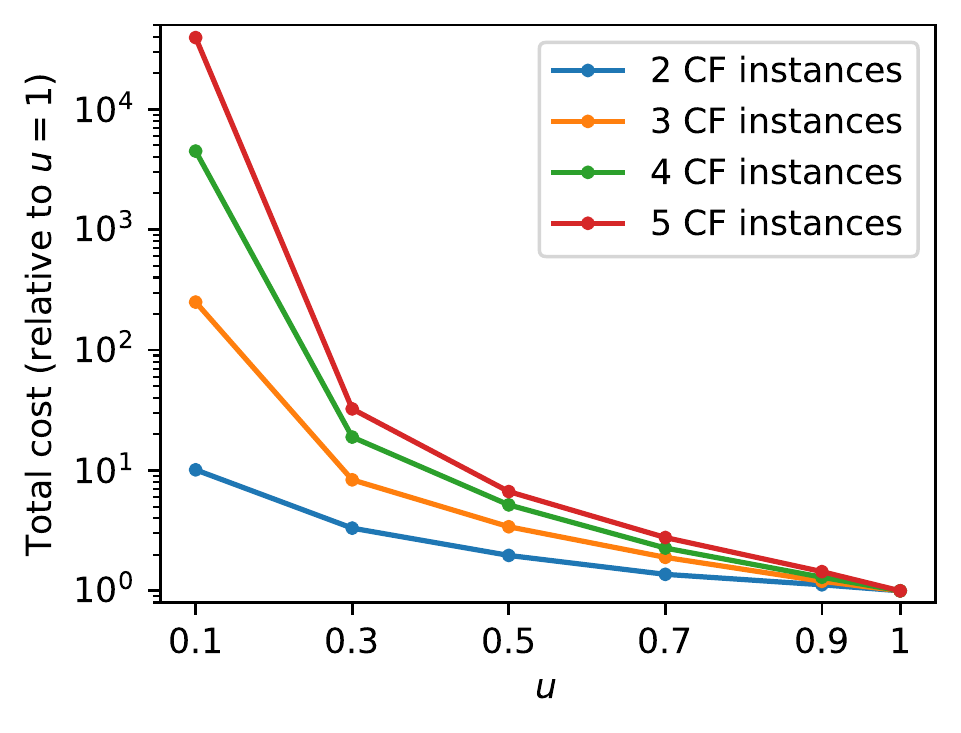}
    \caption{Total improvement cost as a multiple of the one-shot complete improvement for different effort levels $u$. }
    \label{fig:theoretical-random}
\end{figure}

Using a Monte Carlo simulation, Fig.~\ref{fig:theoretical-random} (blue line) shows the total improvement cost at different effort levels $u$ relative to that under single-shot complete fulfillment. Analogous results for the same setup but with three to five counterfactual states arranged on a regular polygon (with initial state $x$ at the center) are also presented in different colors. 

As we can see, a smaller value of $u$ and a larger number of candidate CF instances of all exacerbates the total improvement cost under IPF. In particular, with just five CF instances and an effort level of 0.5, the total improvement cost increases 10-fold relative to the one-shot complete fulfillment ($u=1$). An effort level of 0.1 increases the cost more than 40,000 times! 

\subsection{Summary}
In this section, we characterize four basic algorithmic approaches to generating CF explanations by their IPF cost property. On the positive side, the optimal algorithm that performs an exhaustive search and its finite search space variant are both IPF stable and thus cost preserving. In addition, if these algorithms are configured to be conservative, in that they only return instances with model prediction over a $p > 0.5$, it is likely that IPF can save total cost by rewarding subjects who take chance. 

On the contrary, algorithms based on gradient descent and randomized search risk increasing the total cost under IPF. The issue can be attributed to the same underlying reason: since these algorithms are not guaranteed to always return the closest CF instance, partial fulfillments in the earlier iterations may be ``cancelled'' by later ones, resulting in increased total costs. In addition, many CF algorithms \citep{mothilal2020dice, russell2019efficient} aim to generate multiple CF instances at the same time in order to provide more diversity and options to the subject. In this case, if the choice made by the subject is not consistent across iterations, the net effect is similar to a randomized search CF algorithm, with higher total improvement cost. 

\section{Empirical Analysis}
\label{sec:empirical}

\begin{table}[!b]
    \centering
    \caption{Statistics about the dataset and the model. }
    \resizebox{\columnwidth}{!}{
    \begin{tabular}{lrrrr}\toprule
        Dataset & \# Instance & \# Feature (Cat./Num.) & Acc & F1\\\midrule
        Adult Income & 32,561 & 13 (8/5) & 0.84 & 0.66\\
        German Credit & 1,000 & 24 (17/7) & 0.74 & 0.83 \\\bottomrule
    \end{tabular}
    }
    \label{tab:dataset}
\end{table}

\begin{table*}[!bt]
    \centering
    \caption{One sample input instance and two counterfactual explanations for Adult Income (top) and German Credit (bottom). Non-changed feature values are marked with ``-''. Some non-changed features are omitted for presentation. }
    \resizebox{\textwidth}{!}{
    \begin{tabular}{cccccccccccccc}\toprule
        Age & Work Class & Education & Education Num & Marital Status & Occupation & Relationship & Race & Gender & Capital Gain & Capital Loss & Work Hours & Native Country & $m(x)$ \\\midrule
        42 & Self-Employed & HS-Grad & 9 & Married & Craft-Repair & Husband & White & Male & 0 & 0 & 35 & United States & $\leq \$50k$\\
        - & - & Doctorate & 15 & - & - & - & - & - & - & - & - & - & $> \$50k$\\
       - & Local Gov & - & 14 & - & Manager & - & - & - & - & - & - & - & $> \$50k$\\ 
        \midrule
    \end{tabular}
    }
    \resizebox{\textwidth}{!}{
    \begin{tabular}{cccccccccccccc}\midrule
        Gender & Single & Age & Loan Duration & Purpose of Loan & Loan Amount & Years at Current Home & \# Other Loans & \# Dependents & Has Telephone & No Current Loan & Bank Balance & Housing & $m(x)$ \\\midrule
        Male & True & 42 & 6 & Electronics & 1346 & 4 & 1 & 2 & True & False & 0 & Own & Deny\\
        - & - & - & 24 & Other & - & 3 & 0 & 1 & - & True & - & - & Approve\\
        - & - & - & 12 & Other & - & 1 & 0 & - & - & True & 0-200 & - & Approve\\\bottomrule
    \end{tabular}
    }
    \label{tab:cf-examples}
\end{table*}

\begin{figure*}[!htb]
    \centering
    \includegraphics[width=\textwidth]{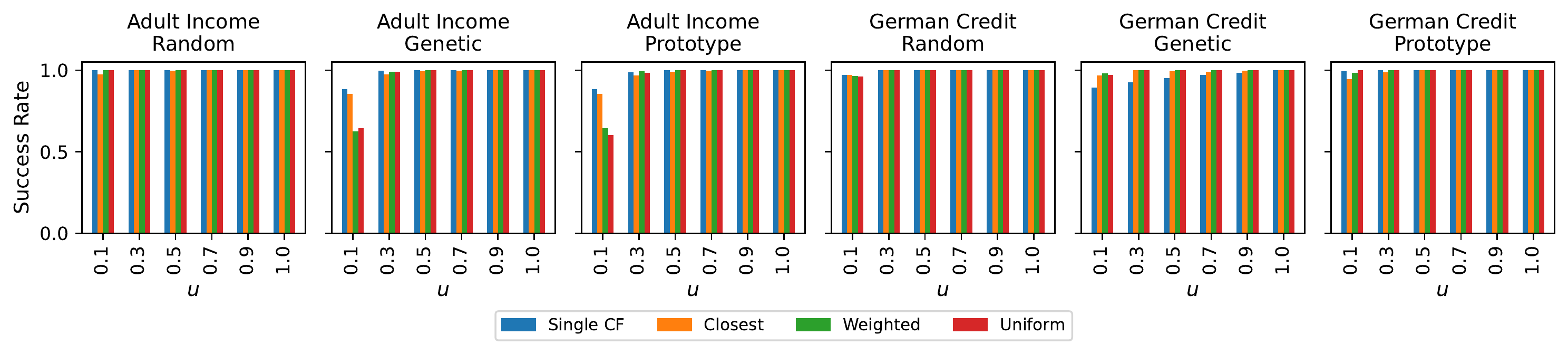}
    \caption{Final success rate of IPF. }
    \label{fig:success-rate}
\end{figure*}

\subsection{Experiment Setup}
In this section, we empirically study the IPF behaviors of CF algorithms. We use two datasets, Adult Income \citep{adult1996} and German Credit \citep{german1994} The first dataset is about predicting whether the annual salary is above \$50k or not from demographic information collected in the 1994 Census. The second dataset is about predicting whether a person is likely to repay a loan or not from the information about the person's finance and that of the loan.

For each dataset, we use a 80\%/20\% train/test split and apply one-hot encoding to the categorical features and train a random forest classifier as the model. To compute counterfactual explanations, we use the \diceml{} package, which is one of the most popular Python packages for tabular data and non-differentiable classifiers. For all the experiments, we focus on correctly classified negative instances and generate positively predicted CF explanations for them. This scenario is the most common use case of CF explanations as recourses, but our analysis applies to any model input and prediction. Tab.~\ref{tab:dataset} gives summary statistics about the dataset and model performance, and Tab.~\ref{tab:cf-examples} presents some inputs and CF instances.

\diceml{} package searches for diverse CF explanation diversity \citet{mothilal2020dice} in various ways. Since the random forest classifier is not meaningfully differentiable (zero gradient almost everywhere), we study random search -- the default method, genetic search algorithm -- based on the method by \citet{schleich2021geco}, and prototype-guided search with KD tree -- based on the method by \citet{van2021interpretable}. The generated CF explanations are post-processed for sparsity with a feature selection procedure. 

In addition, \diceml{} can generate multiple CF explanations. We study two setups, a single CF explanation (which is still stochastic for random and genetic algorithm search), and 20 CF explanations. In the latter case, we consider three CF selection strategies carried out by the subject: 
\begin{enumerate}[leftmargin=4.5ex]
    \item closest: select the closest CF instance, 
    \item weighted: sample a CF instance from a softmax function (with temperature 1) on the negative distance, and 
    \item uniform: select one CF instance uniformly at random. 
\end{enumerate}
In other words, closest and uniform selections are equivalent to weighted selection with temperature approaching 0 and $\infty$ respectively. We call the setup where only one CF is generated (and hence no selection necessary) as ``single CF.''

For IPF, we use a maximum number of $T=30$ iterations and evaluate effort level $u$ from the set of $\{0.1, 0.3, 0.5, 0.7, 0.9\}$ along with one-shot complete fulfillment $u=1$. At the lowest effort level of $u=0.1$, if the counterfactual explanations were consistent each round, after 30 rounds the input would be to more than 95\% towards the CF ($1 - 0.9^{30}=95.8\%$). We do not employ the $\epsilon$ parameter as none of the algorithms return CF instances exactly on the boundary. 

\subsection{Results}

We first answer the most fundamental question. Can CF algorithms lead to positive predictions in the face of IPF? Fig.~\ref{fig:success-rate} shows the success rate of IPF (up to the maximum number of 30 iterations). Most runs with effort level $u \geq 0.3$ succeed \textit{eventually} without any issue (i.e., getting a positive model prediction). For $u=0.1$, Genetic and Prototype algorithms struggle the most, especially when the final CF is stochastically selected from a diverse set with weighted or uniform distributions.

\begin{figure}[t]
    \centering
    \includegraphics[width=\columnwidth]{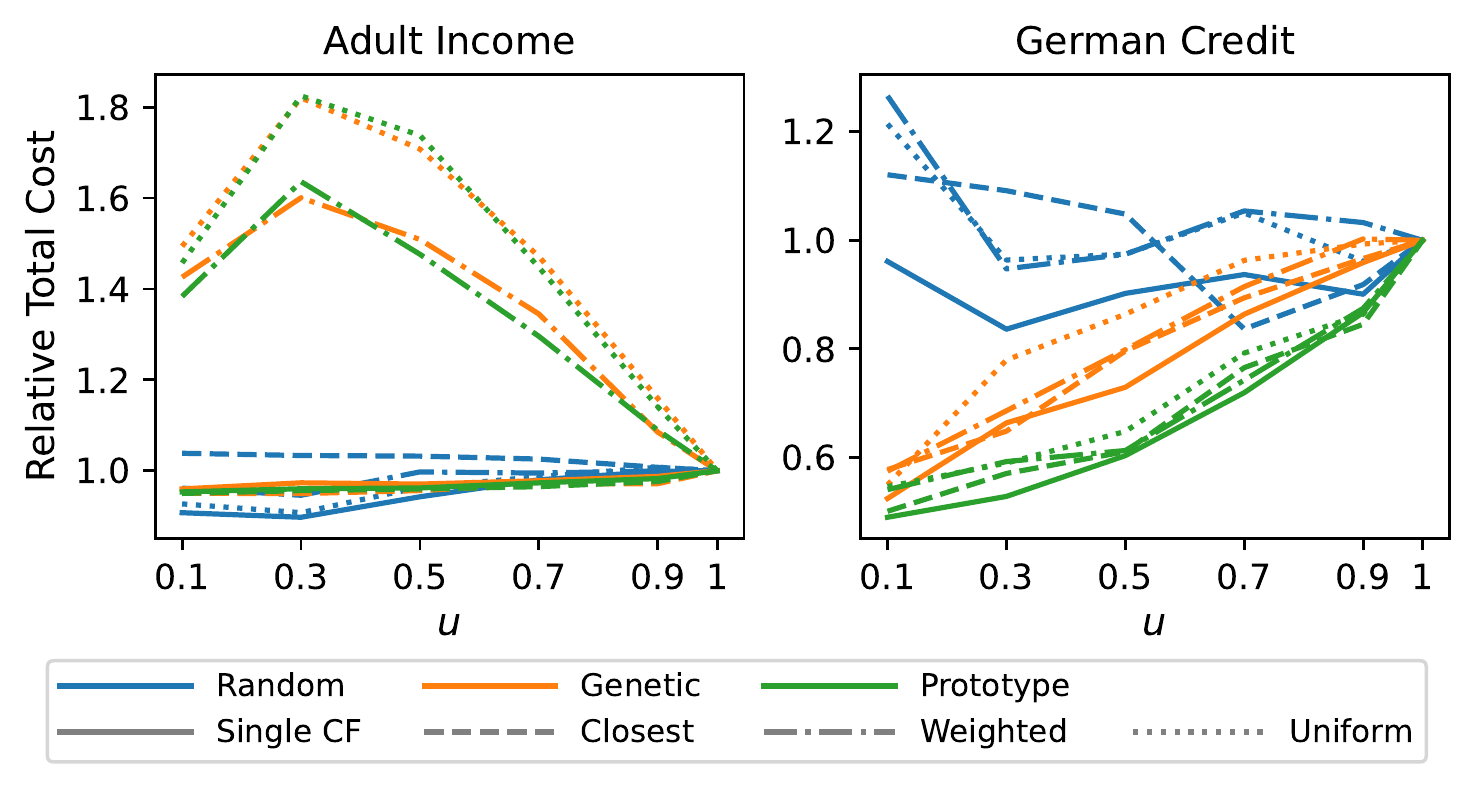}
    \caption{Average total cost at each effort level $u$ relative to that of the one-shot fulfillment $u=1$ for different setups. }
    \label{fig:relative-tc}
\end{figure}

\begin{figure*}[!htb]
    \centering
    \includegraphics[width=\textwidth]{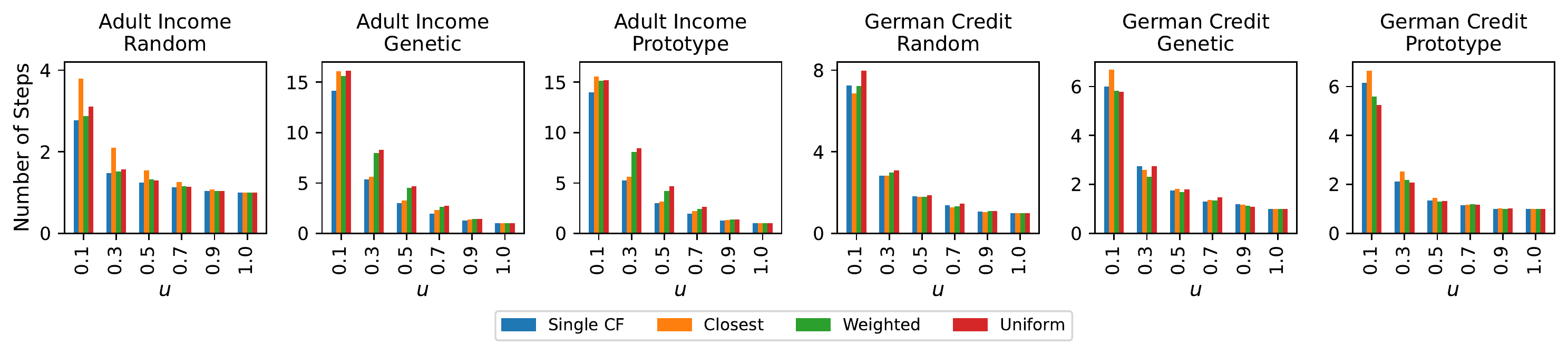}
    \caption{Average number of steps (for successful runs) incurred under different levels of partial fulfillment effort $u$. }
    \label{fig:num-steps}
\end{figure*}

Focusing on the input instances for which all methods succeed (to ensure a fair comparison), we study the main quantity of interest, total improvement cost under IPF, relative to the one-shot baseline, as plotted in Fig.~\ref{fig:relative-tc}. We observe a variety of behaviors across different setups. The trade-off between cost decrease due to conservatism of CF algorithms (i.e., outputting instances far from the decision boundary) and cost increase due to their non-optimality is best shown on the Adult Income dataset by Genetic and Prototype algorithms, under uniform and weighted selection strategies. In this cases, taking very small steps of $u=0.1$ results in lower total improvement cost than taking medium steps of $u=0.3$ and $0.5$ which may incur a 80\% higher cost, because the small steps in the former helps stop closer to the decision boundary, yet all three choices are inferior to even larger $u$ values, where the inconsistency in different iterations of the search is largely avoided. By comparison, the total improvement cost in other setups of Adult Income are not too sensitive to IPF, although it can have both mildly negative (for Random search with Closest selection) and mildly positive effects (for the remaining setups). 

On German Credit, the Genetic and Prototype algorithms exhibit the opposite effect, showing a monotonic cost decrease with less effort level $u$. One possible reason for this phenomenon is the high-dimensionality of the input space of German Credit vs. Adult Income (24 vs. 13), with more than twice as many categorical features. Thus, it is more likely for \textit{some} categorical features to be changed in German Credit, which, in conjunction with conservative CF explanations, results in smaller total cost under low effort levels. The performance of the Random CF algorithm is similar to that in Adult Income, though with slightly higher variance. 

Fig.~\ref{fig:num-steps} plots the average number of steps until success (for runs that do succeed). As expected, the number increases with decreasing $u$, but the speed of increase varies a lot, with those for Genetic and Prototype algorithms on Adult Income being the largest. Interestingly, the closest selection strategy for the Random search algorithm (orange bar on the leftmost plot) performs markedly worse than the rest, suggesting that such strictly greedy selection from a random sample may be especially suboptimal under IPF. For German Credit, the profiles across different algorithms are largely similar, confirming again that properties of the dataset can be influential in the IPF behaviors of the CF algorithms. 

Overall, the three analyses above demonstrates a variety of behaviors of algorithms under IPF, and hence we advocate for them to be included in a standard suite of evaluations for CF algorithms, as well as considered when developing new CF algorithms. From a human perspective, it may also be necessary to for model users (e.g., banks) to provide explicit guidelines to subjects (e.g., mortgage applicants) to calibrate their expectations on this aspect, which may require new policies to be established on this issue. We provide more discussions in Sec.~\ref{sec:discussion-conclusion}. 

For the rest of this section, we demonstrate how IPF can be incorporated in other, existing aspects of analysis. In particular, as a preliminary investigation, we study the fairness of CF algorithms under IPF. Additional ideas are again discussed in Sec.~\ref{sec:discussion-conclusion}. At a high level, the fairness property requires that different demographic groups (e.g., male vs. female, white vs. other race, etc.) should be treated ``equally,'' with different criteria implementing this notion differently. The criterion that we use is demographic parity \citep{luong2011k, kamishima2011fairness, dwork2012fairness}, one of the simplest and most popular, which basically asserts that for a fair metric (e.g., mortgage application approval), its average value is the same across different demographic groups. Viewed from this angle, we study the fairness of total improvement cost and number of steps under the concept of demographic parity. 

\begin{figure}[t]
    \centering
    \includegraphics[width=\columnwidth]{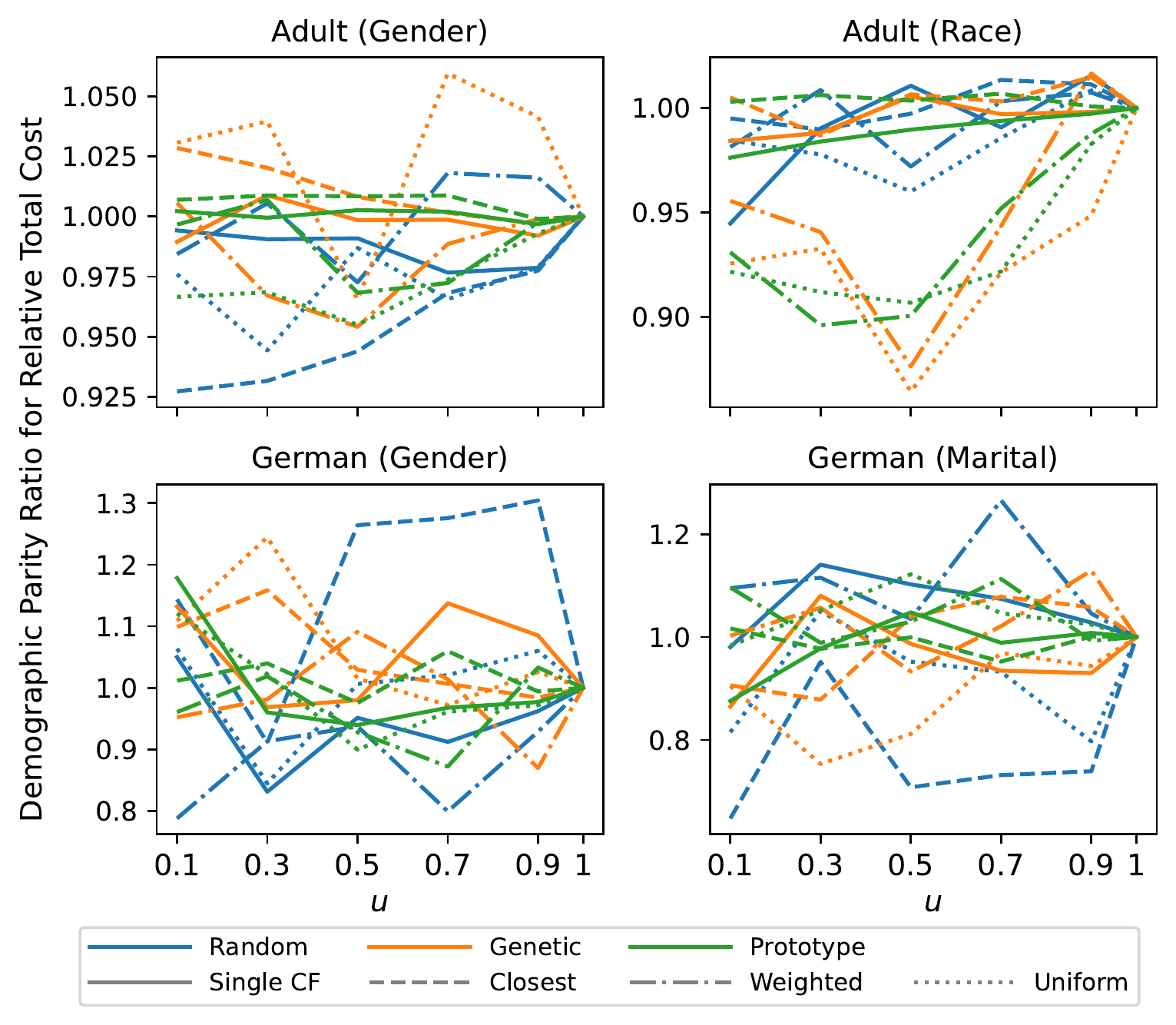}
    \caption{Demographic parity ratio for relative total cost. }
    \label{fig:fairness-tc}
\end{figure}

\begin{figure*}
    \centering
    \includegraphics[width=\textwidth]{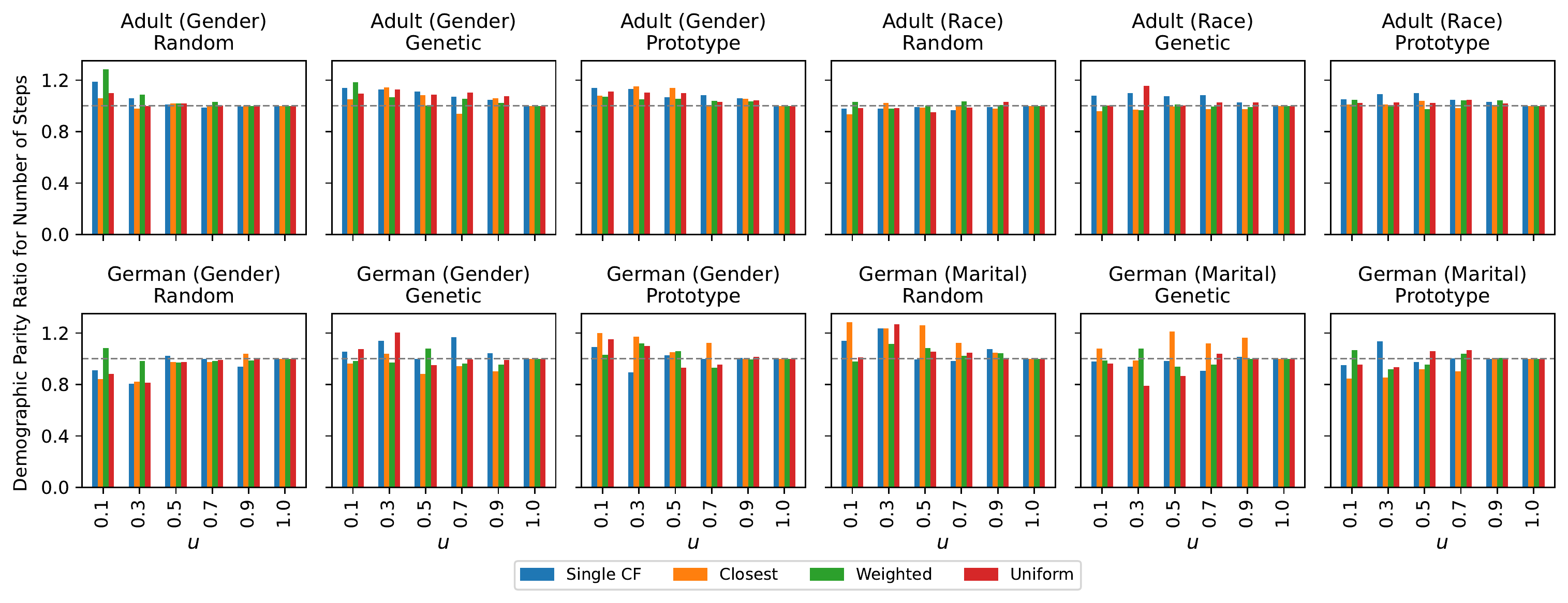}
    \caption{Demographic parity ratio for number of steps. }
    \label{fig:fairness-num-steps}
\end{figure*}

We consider four demographic group splits in the fairness evaluation, commonly used in the literature \citep{slack2020fooling, slack2021counterfactual, dai2022fairness}. For Adult Income, we study gender with a male/female split, and race with a white/non-white split. For German Credit, we use the same gender split, along with marital status with a married/single split. For each group, we take the second value (e.g., female) as the potentially disadvantaged group and study the ratio of the target of investigation in the disadvantaged group to that in the advantaged group. 

We first study the total improvement cost. Note that we compute the ratio of \textit{relative} total cost (relative to $u=1$), to assess whether IPF \textit{further} exacerbate the fairness issue, on top of what is already observed in the literature for vanilla CF explanations \citep{von2022fairness}, the same target as in Fig.~\ref{fig:relative-tc}. The ratio for these four groups are plotted in Fig.~\ref{fig:fairness-tc}, and while we could not identify any clear and consistent trend, IPF could increase the fairness issue as measured by demographic parity by as much as 30\% for the German Credit model in some settings. 

On the number of steps to achieve success, Fig.~\ref{fig:fairness-num-steps} plots the ratio. The trend is more pronounced. In most setups, the ratio increases as $u$ gets smaller, indicating that IPF has a disproportionately higher impact on the disadvantaged group. Given that the total cost does not demonstrate a clear trend, this means that the per-step improvement cost is \textit{smaller} for the disadvantaged group, which means that the generated CF instances are closer to the queried inputs in the first place. Nonetheless, we leave a definitive verification and further exploration of the implications to future work.

\section{Discussion}
\label{sec:discussion-conclusion}

In this paper, we propose the concept of iterative partial fulfillment, which, to the best of our knowledge, is the first formal study of the situation where the subject of a negative model prediction (e.g., denied mortgage application) does not completely fulfill the given counterfactual (CF) explanation before asking for an updated prediction, for many reasons. First, the subject may intentionally decide to take a chance (e.g., betting that a monthly salary increase of \$800 is enough even though the CF instance requires \$1,000), hoping that a state less qualified than the given CF state is sufficient to secure a positive prediction. Second, the subject may not be able to fully satisfy the CF state (e.g., can only pay down two out of four credit card accounts), especially if given a time limit on the CF validity guarantee (e.g., within the next six months). Furthermore, the subject may misinterpret the CF explanation, such as fulfilling any one of the action items rather than all of them when it is presented as a bullet list. When the partial fulfillment does not result in a positive model prediction, the subject receives a new CF state as part of the rejection and performs an improvement towards the new state. This process repeats until the model prediction is positive, and we call it \textit{iterative partial fulfillment} (IPF). 

Given that virtually all CF algorithms are memoryless (i.e., the CF explanation is generated from only the current input), and most employ local gradient-based or randomized search, it is possible that the CF explanation for a (still negative) partially fulfilled state is different from that of the original input, guiding the subject on a different path of improvement. As a result, the net effect of IPF on the welfare of the subject, most directly measured by final success rate and total improvement cost, could be positive or negative. 

A positive effect could occur when the generated CF instance is conservative, i.e., lying far into the positive prediction region. Such a CF algorithm configuration could be preferred if the model user (e.g., the bank) wants to ensure that the subject is likely to get a positive prediction even if they cannot perfectly follow the CF recommendation. The exact same reasoning allows the subject to engage proactively in partial fulfillment and save on the improvement cost. By contrast, a negative effect could occur when the CF explanation provides different and conflicting advice at different rounds of partial fulfillment. 

In our theoretical analysis, we prove that the optimal cost CF algorithm and its finite search approximation version are guaranteed to not increase total cost under IPF. However, the same could not be said for two popular practical algorithms, gradient ascent and randomized search, both of which worsen subject welfare, sometimes significantly and even potentially unboundedly in theory. 

In our experimental investigation on two datasets, Adult Income and German Credit, totalling 24 CF explainer configurations, we identified both positive and negative effects of IPF, suggesting that IPF is sensitive to properties of the dataset and explainer. As a result, we recommend IPF analysis to be included as part of a standard evaluation suite of CF algorithms. 

For future work, one direction is to consider alternative formulations of IPF. We use a deterministic, fixed-proportion model for continuous features (i.e., for current feature value of $x_d$ and target value $x_d'$, partial fulfillment results in $(1-u)\cdot x_d+u\cdot x_d'$), but this step could be made stochastic by sampling from some distribution centered on $(1-u)\cdot x_d+u\cdot x_d'$, or a fixed-magnitude model could be used where the amount of improvement $\Delta_d$ on each feature is specified. Alternative models on categorical features could also be developed. Last, improvements on some features may be correlated, due to the underlying causal relationships (e.g., change in job title $\rightarrow$ change in salary), so incorporating causal information, potentially in the form of a causal graph, could be explored. Moreover, finding CF algorithms that are stable with respect to more than one IPF notion would be desirable, as different subjects are likely to employ different IPF approaches. 

Additionally, the temporal aspect of the IPF could be studied with more real-world elements. As time goes by in the IPF process, some feature values, such as age, would change in certain manners, which is ignored in the current formulation. Moreover, the very act of querying for a new model prediction may have an impact on some features, such as the bank account balance due to the payment of an application fee, or the credit score due to the bank pulling the credit report, which, at least in the United States, results in a small decrease of the credit score. 

One direction to extend the IPF analysis is to integrate it with other aspects of evaluations. We give a demonstration for the case of fairness, and future work could focus on aspects such as its stability to input perturbations \citep{dominguez2022adversarial, virgolin2023robustness} and model shifts \citep{rawal2020algorithmic}. 

In addition, we focus on IPF analyses of existing CF algorithms, but as a recurring theme of research, the other side of the coin naturally follows: developing new CF algorithms or regularizing existing ones to behave well under IPF scenarios, following analogous works for other CF properties such as robustness \citep{slack2020fooling} and fairness \citep{gupta2019equalizing}. 

Finally, given the diverse and potentially discriminative effects exhibited by IPF, society needs to be better informed and aware of it, especially as some subjects have already been engaging in such behaviors. For example, when the rejection letter of a mortgage application provides some CF explanations as recommendations, the bank may want to, or even be required to, include information about possible outcomes of a re-application with only partial fulfillment. In addition, the application process could allow the applicant to voluntarily reveal their previous applications, so that more stable and consistent CF explanations can be computed, in order to minimize the possibility of conflicting improvement recommendations given to the applicant. All of these changes require not only technical innovations but also policy discussions, for which we hope that this paper serves as a good starting point for such conversations.

\bibliographystyle{ACM-Reference-Format}
\bibliography{big_bib, new}

\end{document}